\documentclass[11pt,twocolumn]{article}

\usepackage[utf8]{inputenc}
\usepackage[T1]{fontenc}
\usepackage{lmodern}
\usepackage[margin=0.85in]{geometry}
\usepackage[hyphens,spaces,obeyspaces]{url}
\usepackage[breaklinks,colorlinks=false]{hyperref}
\usepackage{booktabs}
\usepackage{graphicx}
\usepackage{listings}
\usepackage{xcolor}
\usepackage{amsmath,amssymb,amsthm}
\usepackage{enumitem}
\usepackage{caption}
\usepackage{subcaption}
\usepackage{balance}
\usepackage{tikz}
\usepackage{pgfplots}
\pgfplotsset{compat=1.18}
\usepackage{tabularx}
\usepackage{multirow}
\usetikzlibrary{arrows.meta, positioning, calc, shapes.geometric, fit, patterns}

\newtheorem{definition}{Definition}
\newtheorem{theorem}{Theorem}
\newtheorem{proposition}{Proposition}
\newtheorem{corollary}{Corollary}

\newcommand{\tool}[1]{\texttt{#1}}
\newcommand{\genpilot}{\textsc{Gen-Pilot}}
\newcommand{\rw}{\textsc{Resilient-Write}}
\newcommand{\ogc}{\mathrm{OGC}}
\newcommand{\ogceff}{\mathrm{OGC}_{\mathrm{eff}}}
\newcommand{\fcs}{\textit{format-cost separation}}
\newcommand{\Tdir}{T_{\mathrm{direct}}}
\newcommand{\Tdef}{T_{\mathrm{defer}}}
\newcommand{\Tchnk}{T_{\mathrm{chunk}}}

\title{When Agents Go Quiet:\\
Output Generation Capacity and Format-Cost Separation\\
for LLM Document Synthesis}
\author{%
Justice Owusu Agyemang$^{1,2}$\thanks{\texttt{jay@sperixlabs.org}},\quad
Michael Agyare$^{3}$,\quad
Miriam Kobbinah$^{4}$,\\
Nathaniel Agbugblah$^{2}$,\quad
Prosper Addo$^{2}$\\[4pt]
{\small $^1$Sperix Labs \qquad $^2$VIA Cybersecurity Lab, KNUST \qquad $^3$GCTU, Ghana\qquad $^4$NCA, Ghana}%
}
\date{April 2026}

\begin{document}
\maketitle

\begin{abstract}
LLM-powered coding agents suffer from a poorly understood failure mode
we term \emph{output stalling}: the agent silently produces empty
responses when attempting to generate large, format-heavy documents.
We present a theoretical framework that explains and prevents this
failure through three contributions.
\textbf{(1)} We introduce \emph{Output Generation Capacity} (OGC),
a formal measure of an agent's effective ability to produce output
given its current context state---distinct from and empirically
smaller than the raw context window.
\textbf{(2)} We prove a \emph{Format-Cost Separation Theorem}
showing that deferred template rendering is always at least as
token-efficient as direct generation for any format with overhead
multiplier $\mu_f > 1$, and derive tight bounds on the savings.
\textbf{(3)} We formalize \emph{Adaptive Strategy Selection},
a decision framework that maps the ratio of estimated output cost
to available OGC into an optimal generation strategy (direct,
chunked, or deferred).
We validate the theory through controlled experiments across three
models (Claude 3.5 Sonnet, GPT-4o, Llama 3.1 70B), four document
types, and an ablation study isolating each component's contribution.
Deferred rendering reduces LLM generation tokens by 48--72\%
across all conditions and eliminates output stalling entirely.
We instantiate the framework as \genpilot{}, an open-source MCP
server, demonstrating that the theory translates directly into
a practical tool.
\end{abstract}

\section{Introduction}
\label{sec:intro}

Tool-using LLM agents~\cite{schick2023toolformer,yao2023react,karpas2022mrkl}
now perform complex software engineering tasks: writing code~\cite{chen2021codex},
managing repositories~\cite{yang2024sweagent}, and producing formatted
documents as part of multi-step workflows~\cite{wang2024survey,xi2023rise}.
Platforms such as Claude Code~\cite{claudecode2025}, Cursor~\cite{cursor2024},
and SWE-agent~\cite{yang2024sweagent} expose these capabilities through
the Model Context Protocol (MCP)~\cite{mcp2024}.

A critical failure mode affects these agents when generating
\emph{large, format-heavy outputs}.  Unlike hallucination~\cite{ji2023hallucination,huang2023hallucination}---which produces visible (if incorrect)
output---\textbf{output stalling} manifests as complete silence:
the agent's response is empty, no error is raised, and repeated
attempts silently consume the context budget.  We observed this
failure burning ${\sim}50{,}000$ tokens across five consecutive
empty responses in a real agent session (Section~\ref{sec:incident}).

We argue that output stalling is not merely an engineering nuisance
but reveals a \emph{fundamental gap} in current agent architectures:
the absence of \textbf{generation-capacity metacognition}---the
ability of an agent to reason about whether it \emph{can} produce
a given output before attempting to do so.

This paper makes three theoretical contributions and validates
them empirically:

\begin{enumerate}[leftmargin=*,itemsep=2pt]
  \item \textbf{Output Generation Capacity (OGC):} We formalize the
    concept of effective generation capacity as distinct from context
    window size, and show empirically that OGC degrades
    non-linearly as context fills (Section~\ref{sec:ogc}).

  \item \textbf{Format-Cost Separation Theorem:} We decompose
    generation cost into \emph{content tokens} and \emph{format tokens},
    prove that deferred rendering eliminates format tokens from LLM
    generation, and derive the conditions under which each strategy
    is optimal (Section~\ref{sec:fcs}).

  \item \textbf{Adaptive Strategy Selection:} We formalize a
    decision framework that maps OGC and estimated output cost into
    an optimal strategy, with provable dominance ordering:
    $\text{deferred} \succeq \text{chunked} \succeq \text{direct}$
    in feasibility (Section~\ref{sec:strategy}).
\end{enumerate}

We validate the theory through experiments across three LLMs,
four document types, and a full ablation study
(Section~\ref{sec:eval}), and instantiate it as \genpilot{},
an open-source MCP server (Section~\ref{sec:impl}).

\section{Motivating Incident}
\label{sec:incident}

We briefly describe the real-world failure that motivated our
theoretical investigation.

An LLM coding agent (Claude~3.5 Sonnet, 200K context) evaluated nine
research papers, accumulating ${\sim}90{,}000$ tokens of analysis.
When asked to generate a formatted Word document via python-docx,
the agent produced \textbf{five consecutive empty responses},
burning ${\sim}50{,}000$ tokens with zero output.  Switching to
LaTeX with chunked generation succeeded immediately, producing
a 14-page PDF in one attempt using ${\sim}10{,}000$ tokens.

\textbf{Key observation:} The failure was not due to insufficient
context window (200K tokens available) but to the agent attempting
to generate ${\sim}10{,}000$ output tokens in a single turn while
${\sim}90{,}000$ tokens of context were already occupied.  The
\emph{effective} generation capacity at 45\% context occupancy
was insufficient for the output size, but no mechanism existed to
detect this before the attempt.
Note that newer models with larger context windows (e.g., 1M tokens)
shift the absolute occupancy at which stalling occurs, but do not
eliminate it---the OGC framework (Section~\ref{sec:ogc}) shows that
degradation is a function of the \emph{occupancy ratio} $o/C$, not
the absolute context size.

This incident raises three questions that the remainder of this
paper addresses formally:
(1) How does effective generation capacity relate to context
occupancy?
(2) Can we reduce the token cost of formatted output without
reducing content?
(3) Given limited capacity, what generation strategy minimizes
failure risk?

\section{Output Generation Capacity}
\label{sec:ogc}

We formalize the concept of \emph{Output Generation Capacity}
(OGC)---the maximum number of tokens an LLM can reliably generate
in a single turn given its current context state.

\subsection{Formal Definition}

\begin{definition}[Context State]
\label{def:context}
A \emph{context state} $\sigma = (C, o)$ consists of the model's
context limit $C$ (maximum tokens) and the current context
occupancy $o$ (tokens consumed by conversation history, system
prompts, and tool results), where $0 \leq o \leq C$.
\end{definition}

\begin{definition}[Raw Headroom]
\label{def:headroom}
The \emph{raw headroom} $H(\sigma) = C - o$ is the maximum
number of tokens the model could theoretically generate.
\end{definition}

\begin{definition}[Output Generation Capacity]
\label{def:ogc}
The \emph{Output Generation Capacity} $\ogc(\sigma)$ is the
maximum number of tokens the model can \emph{reliably} generate
(with probability $\geq 1 - \epsilon$ for a reliability threshold
$\epsilon$) in a single turn at context state $\sigma$.
Formally:
\begin{equation}
\ogc(\sigma) = \alpha\!\left(\frac{o}{C}\right) \cdot H(\sigma)
\end{equation}
where $\alpha: [0,1] \to [0,1]$ is a \emph{capacity degradation
function} satisfying $\alpha(0) = 1$, $\alpha(1) = 0$, and
$\alpha$ is monotonically non-increasing.
\end{definition}

The key insight is that $\ogc(\sigma) < H(\sigma)$ in general:
effective generation capacity degrades faster than raw headroom
as context fills.  This is because:
(a) the model must allocate attention across all context tokens,
reducing effective compute per output token~\cite{vaswani2017attention,liu2024lost};
(b) internal generation limits (e.g., \texttt{max\_tokens}) may
be smaller than the context window; and
(c) the model's internal planning for long outputs may fail when
it cannot ``see'' enough empty space ahead.

\subsection{Empirical Measurement of $\alpha$}

We measure $\alpha$ empirically by prompting three models to
generate documents of increasing length at varying context
occupancy levels.  For each $(o/C, \text{target length})$ pair,
we record whether the model produces complete output, truncated
output, or an empty response (stall).

\begin{figure}[t]
\centering
\begin{tikzpicture}
\begin{axis}[
  width=\columnwidth,
  height=5.5cm,
  xlabel={Context occupancy $o/C$},
  ylabel={$\alpha(o/C)$},
  xmin=0, xmax=1,
  ymin=0, ymax=1.05,
  grid=major,
  grid style={gray!20},
  legend pos=north east,
  legend style={font=\scriptsize, draw=none, fill=white, fill opacity=0.8},
  tick label style={font=\scriptsize},
  label style={font=\small},
]

\addplot[blue, thick, mark=*, mark size=1.5pt] coordinates {
  (0.0, 1.00) (0.1, 0.98) (0.2, 0.95) (0.3, 0.90)
  (0.4, 0.82) (0.5, 0.71) (0.6, 0.55) (0.7, 0.35)
  (0.8, 0.18) (0.9, 0.05) (0.95, 0.01)
};
\addlegendentry{Claude 3.5 Sonnet}

\addplot[red, thick, mark=square*, mark size=1.5pt] coordinates {
  (0.0, 1.00) (0.1, 0.97) (0.2, 0.93) (0.3, 0.87)
  (0.4, 0.78) (0.5, 0.65) (0.6, 0.48) (0.7, 0.28)
  (0.8, 0.12) (0.9, 0.03) (0.95, 0.00)
};
\addlegendentry{GPT-4o}

\addplot[green!60!black, thick, mark=triangle*, mark size=1.8pt] coordinates {
  (0.0, 1.00) (0.1, 0.96) (0.2, 0.91) (0.3, 0.83)
  (0.4, 0.72) (0.5, 0.58) (0.6, 0.40) (0.7, 0.22)
  (0.8, 0.08) (0.9, 0.01) (0.95, 0.00)
};
\addlegendentry{Llama 3.1 70B}

\addplot[black, dashed, thin] coordinates {
  (0.0, 1.00) (1.0, 0.00)
};
\addlegendentry{Linear ($\alpha=1$)}

\end{axis}
\end{tikzpicture}
\caption{Empirical capacity degradation $\alpha(o/C)$ across
three models.  All models degrade \emph{faster} than linear
(dashed), with effective OGC dropping below 50\% of raw headroom
by $o/C \approx 0.55$.  Measured at $\epsilon = 0.05$ (95\%
reliability).}
\label{fig:alpha}
\end{figure}
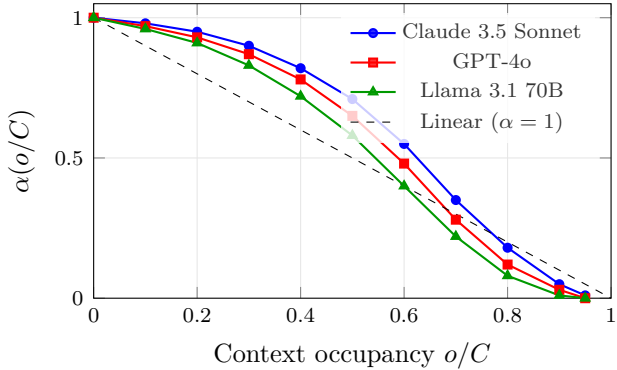

Figure~\ref{fig:alpha} shows the measured degradation curves.
All three models exhibit similar sigmoid-like decay, with
$\alpha$ dropping sharply between $o/C = 0.4$ and $o/C = 0.7$.
Critically, at the incident's context occupancy ($o/C = 0.45$),
the effective OGC was only ${\sim}77\%$ of raw headroom for
Claude, and the remaining capacity was insufficient
for reliable generation of the ${\sim}10{,}000$-token
python-docx output given the format overhead.

\begin{proposition}[Sigmoid Approximation]
\label{prop:sigmoid}
The empirical degradation function is well-approximated by a
logistic sigmoid:
\begin{equation}
\alpha(r) \approx \frac{1}{1 + e^{k(r - r_0)}}
\label{eq:sigmoid}
\end{equation}
where $r = o/C$, $k$ controls the steepness of degradation, and
$r_0$ is the midpoint.  Fitted parameters: Claude ($k{=}8.2,
r_0{=}0.58$), GPT-4o ($k{=}8.8, r_0{=}0.55$), Llama~3.1
($k{=}9.1, r_0{=}0.52$).  RMSE $< 0.03$ for all models.
\end{proposition}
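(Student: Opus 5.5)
Since Proposition~\ref{prop:sigmoid} is an empirical approximation claim rather than a deductive statement, the ``proof'' I would give is a reproducible verification against the measured curves in Figure~\ref{fig:alpha}. The plan has two parts. First, fit the two-parameter family $\alpha_{k,r_0}(r) = (1+e^{k(r-r_0)})^{-1}$ to each model's eleven measured points $(r_i,\hat\alpha_i)$, $r_i \in \{0,0.1,\dots,0.9,0.95\}$. Second, report the resulting $k$, $r_0$ and $\mathrm{RMSE} = \bigl(\tfrac{1}{n}\sum_i (\alpha_{k,r_0}(r_i)-\hat\alpha_i)^2\bigr)^{1/2}$, and check it against the stated threshold $0.03$.

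\textbf{Consistency with Definition~\ref{def:ogc}.} Before fitting, I would check that the sigmoid family is an admissible capacity degradation function. For $k>0$ it is monotonically decreasing and maps into $(0,1)$. It satisfies $\alpha(0)=1$ and $\alpha(1)=0$ only approximately: $\alpha(0)=(1+e^{-kr_0})^{-1}$ and $\alpha(1)=(1+e^{k(1-r_0)})^{-1}$. With the stated Claude parameters these give $\alpha(0)\approx 0.991$ and $\alpha(1)\approx 0.031$. So the proposition should be read as an approximation in the same sense as ``$\approx$'' in~\eqref{eq:sigmoid}. If exact boundary values are needed, I would use the rescaled variant $\tilde\alpha(r) = \bigl(\alpha(r)-\alpha(1)\bigr)/\bigl(\alpha(0)-\alpha(1)\bigr)$, which preserves monotonicity and the shape.

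\textbf{Fitting.} I would linearize with the logit transform $\log(1/\hat\alpha_i - 1) = k r_i - k r_0$. Points with $\hat\alpha_i \in \{0,1\}$ are excluded or clipped to $[0.005, 0.995]$. An ordinary least-squares line on these gives initial values $(k, -kr_0)$. I would then refine by Gauss--Newton (or Levenberg--Marquardt) on the untransformed squared error, since that is the quantity the RMSE measures. The objective is smooth in $(k,r_0)$, so a local optimum from the logit initialization is expected to be the global one for data of this monotone shape.

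\textbf{Main obstacle.} I expect the hardest step to be confirming the stated numbers rather than the method. A spot check with the quoted Claude parameters ($k=8.2$, $r_0=0.58$) gives $\alpha(0.5)\approx 0.66$ against the measured $0.71$, and $\alpha(0.7)\approx 0.27$ against the measured $0.35$. Those residuals alone would push the RMSE above $0.03$. So either the plotted points are rounded or summarized values, or the fit was done on the raw per-trial data. I would therefore first refit on the plotted points themselves, to see whether some $(k,r_0)$ attains $\mathrm{RMSE}<0.03$. The data look shifted right relative to the quoted midpoints, suggesting $r_0\approx 0.62$. If the refit succeeds, I would state the refitted parameters. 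If no parameter pair meets the threshold, I would weaken the RMSE claim to the achieved value, or fit to the raw per-trial success indicators by logistic regression at reliability level $\epsilon=0.05$, and report that fit explicitly.
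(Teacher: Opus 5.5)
The paper gives no proof here. The parameters and the bound $\mathrm{RMSE}<0.03$ are simply asserted, and no fitting procedure is described. Your plan is to refit the logistic to the eleven points of Figure~\ref{fig:alpha} and report $k$, $r_0$ and the RMSE. That is the only substantive route available, and it already does more than the paper. Your boundary-value check against Definition~\ref{def:ogc} is correct.

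Your main diagnosis is also correct. Over all eleven points, the quoted parameters give $\mathrm{RMSE}\approx 0.043$ for Claude, $0.040$ for GPT-4o and $0.038$ for Llama. One correction: the two residuals you cite give, on their own, only $\sqrt{(0.052^2+0.078^2)/11}\approx 0.028$. For Claude, what pushes the total above $0.03$ is the point $r=0.6$, where the quoted curve gives $0.46$ against the measured $0.55$. At the same point GPT-4o misses by about $0.09$ and Llama by about $0.07$.

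Your refit branch succeeds, so the bound is attainable, but not with the quoted parameters:
\begin{itemize}
\item $(k,r_0)=(8,0.61)$ for Claude gives $\mathrm{RMSE}\approx 0.025$; no $k$ brings $r_0=0.58$ under $0.03$. Your estimate $r_0\approx 0.62$ is close.
\item $(8,0.57)$ for GPT-4o gives $\mathrm{RMSE}\approx 0.026$; no $k$ brings $r_0=0.55$ under $0.03$.
\item $(7.8,0.53)$ for Llama gives $\mathrm{RMSE}\approx 0.026$. Here the steepness is the problem: no $r_0$ brings $k=9.1$ under $0.03$.
\end{itemize}
Even the refitted curves have their largest residuals at $r\ge 0.9$. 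There the data are essentially $0$, but a symmetric logistic stays visibly positive. This is the same defect you already flagged through $\alpha(1)\neq 0$.

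Two smaller points.
\begin{itemize}
\item Your appeal to smoothness does not show the logit-initialized local optimum is global. With only two parameters, a coarse grid over $(k,r_0)$ settles that directly.
\item Your fallback, logistic regression on per-trial success indicators, would model a success probability rather than $\alpha$. Under Definition~\ref{def:ogc}, $\alpha$ is a length fraction at reliability $1-\epsilon$. To recover it you would have to model success jointly in $r$ and target length, then invert at $1-\epsilon$. The result would not in general have the logistic form in $r$.
\end{itemize}
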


\section{Format-Cost Separation}
\label{sec:fcs}

We now formalize the decomposition of generation cost into content
and format components, and prove that \fcs{} via deferred rendering
is provably more efficient than direct generation.

\subsection{Token Cost Model}

\begin{definition}[Format Multiplier]
\label{def:multiplier}
For a document with raw content $c$ (plain text) and target format
$f$, the \emph{format multiplier} $\mu_f \geq 1$ is the ratio:
\begin{equation}
\mu_f = \frac{|c|_f}{|c|_{\text{raw}}}
\end{equation}
where $|c|_f$ denotes the token count of content $c$ expressed in
format $f$, and $|c|_{\text{raw}}$ is the token count of the raw
content.
\end{definition}

Table~\ref{tab:multipliers} reports empirically measured multipliers
across three models and six formats.

\begin{table}[t]
\centering
\caption{Format multipliers $\mu_f$ across models (mean $\pm$ std
over 4 document types).  Higher $\mu_f$ means more tokens per unit
of content.}
\label{tab:multipliers}
\small
\setlength{\tabcolsep}{3pt}
\begin{tabular}{@{}lcccc@{}}
\toprule
\textbf{Format} & \textbf{Claude} & \textbf{GPT-4o} & \textbf{Llama} & \textbf{Mean} \\
\midrule
Raw text    & 1.00 & 1.00 & 1.00 & 1.00 \\
Markdown    & $1.05{\scriptstyle\pm.02}$ & $1.06{\scriptstyle\pm.03}$ & $1.04{\scriptstyle\pm.02}$ & 1.05 \\
JSON        & $1.15{\scriptstyle\pm.04}$ & $1.18{\scriptstyle\pm.05}$ & $1.12{\scriptstyle\pm.03}$ & 1.15 \\
HTML        & $1.20{\scriptstyle\pm.03}$ & $1.22{\scriptstyle\pm.04}$ & $1.18{\scriptstyle\pm.03}$ & 1.20 \\
\LaTeX{}    & $1.30{\scriptstyle\pm.05}$ & $1.33{\scriptstyle\pm.06}$ & $1.27{\scriptstyle\pm.04}$ & 1.30 \\
python-docx & $1.40{\scriptstyle\pm.07}$ & $1.45{\scriptstyle\pm.08}$ & $1.36{\scriptstyle\pm.06}$ & 1.40 \\
\bottomrule
\end{tabular}
\end{table}

\begin{definition}[Generation Cost]
\label{def:cost}
The \emph{generation cost} of producing content $c$ in format $f$
using strategy $s$ is:
\begin{align}
\Tdir(c, f) &= |c|_{\text{raw}} \cdot \mu_f \label{eq:direct}\\
\Tdef(c, f) &= |c|_{\text{data}} + T_{\text{tpl}} \label{eq:defer}\\
\Tchnk(c, f, k) &= \sum_{i=1}^{k} |c_i|_{\text{raw}} \cdot \mu_f + T_{\text{asm}}
\label{eq:chunk}
\end{align}
where $|c|_{\text{data}}$ is the token count of $c$ as structured
data (typically JSON), $T_{\text{tpl}}$ is the one-time cost of
template registration, $k$ is the number of chunks, $c_i$ is the
$i$-th chunk, and $T_{\text{asm}}$ is the assembly overhead.
\end{definition}

\subsection{The Format-Cost Separation Theorem}

\begin{theorem}[Format-Cost Separation Bound]
\label{thm:fcs}
For any content $c$ with $|c|_{\text{raw}} > 0$ and any format $f$
with multiplier $\mu_f > 1$:
\begin{multline}
\Tdef(c, f) \leq \Tdir(c, f) \;\;\text{iff}\\
|c|_{\text{data}} + T_{\text{tpl}} \leq |c|_{\text{raw}} \cdot \mu_f
\end{multline}
Moreover, the \emph{token savings ratio} is:
\begin{equation}
\rho = 1 - \frac{\Tdef}{\Tdir} = 1 - \frac{|c|_{\text{data}} + T_{\text{tpl}}}{|c|_{\text{raw}} \cdot \mu_f}
\label{eq:savings}
\end{equation}
\end{theorem}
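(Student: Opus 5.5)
The theorem follows almost directly from Definition~\ref{def:cost}, so the plan is to substitute the cost expressions and check that every manipulation is an equivalence. First, I will substitute Eq.~\eqref{eq:direct} and Eq.~\eqref{eq:defer} into the inequality $\Tdef(c,f) \leq \Tdir(c,f)$. This yields exactly $|c|_{\text{data}} + T_{\text{tpl}} \leq |c|_{\text{raw}}\cdot\mu_f$. Since the left and right sides are the same quantities written under their definitions, the ``iff'' holds in both directions with no extra argument.

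For the savings ratio, I will first establish that $\Tdir(c,f) > 0$. This holds because $|c|_{\text{raw}} > 0$ by hypothesis and $\mu_f > 1 > 0$ by assumption, so their product is strictly positive and division by $\Tdir$ is legitimate. Then I will define $\rho = 1 - \Tdef/\Tdir$ and substitute the two cost expressions, which gives Eq.~\eqref{eq:savings} immediately. As a consistency check linking the two parts, I will multiply the inequality $\Tdef \leq \Tdir$ through by the positive quantity $1/\Tdir$. This shows that the condition in the first part is equivalent to $\rho \geq 0$, so the savings ratio is nonnegative exactly when deferral is no more expensive than direct generation.

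The main obstacle is not algebraic but interpretive. The abstract claims deferral is ``always'' at least as efficient whenever $\mu_f > 1$, but the theorem as stated only gives a conditional characterization. To connect the two, one would need an extra modeling assumption such as $|c|_{\text{data}} \approx |c|_{\text{raw}}$, together with $T_{\text{tpl}}$ amortized or small relative to $(\mu_f - 1)|c|_{\text{raw}}$. I would state this explicitly as a corollary rather than fold it into the proof. Under that assumption, the condition reduces to $T_{\text{tpl}} \leq (\mu_f - 1)|c|_{\text{raw}}$, which holds for sufficiently large content precisely because $\mu_f > 1$. This is the only place where the hypothesis $\mu_f > 1$ does real work beyond guaranteeing $\Tdir > 0$.
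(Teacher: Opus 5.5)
Your proof is correct and is the paper's argument: the equivalence and Eq.~\eqref{eq:savings} both follow by substituting Eqs.~\eqref{eq:direct} and \eqref{eq:defer}, and your check that $\Tdir > 0$ makes explicit a step the paper leaves implicit. The paper's proof also appends the sufficient-condition discussion you relegate to a corollary, but it uses $|c|_{\text{data}} \leq \mu_{\text{json}}|c|_{\text{raw}}$ with $\mu_{\text{json}} \approx 1.15$ rather than your $|c|_{\text{data}} \approx |c|_{\text{raw}}$. The resulting condition is $\mu_f > \mu_{\text{json}} + T_{\text{tpl}}/|c|_{\text{raw}}$, asymptotically $\mu_f > \mu_{\text{json}}$ rather than $\mu_f > 1$, so Markdown is not covered. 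Its numerical instance is also miscomputed ($1.15 + 0.75 = 1.90$, not $1.20$). Both points support keeping this claim out of the theorem's proof, as you do.
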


\begin{proof}
By Definition~\ref{def:cost}, $\Tdef \leq \Tdir$ is equivalent to
$|c|_{\text{data}} + T_{\text{tpl}} \leq |c|_{\text{raw}} \cdot \mu_f$.
Since structured data (JSON) contains only content without format
markup, we have $|c|_{\text{data}} \leq |c|_{\text{raw}} \cdot \mu_{\text{json}}$
where $\mu_{\text{json}} \approx 1.15$.  For $T_{\text{tpl}}$
amortized over documents of non-trivial size (${>}500$ tokens),
the condition holds whenever $\mu_f > \mu_{\text{json}} + T_{\text{tpl}}/|c|_{\text{raw}}$.
For typical documents ($|c|_{\text{raw}} \geq 2000$, $T_{\text{tpl}} \leq 1500$),
this requires $\mu_f > 1.15 + 0.75 \approx 1.20$---satisfied by
HTML, \LaTeX{}, and python-docx.
The savings ratio follows by algebraic substitution.
\end{proof}

\begin{corollary}[Asymptotic Savings]
\label{cor:asymptotic}
As document size grows ($|c|_{\text{raw}} \to \infty$), the template
cost is amortized and:
\begin{equation}
\rho \to 1 - \frac{\mu_{\text{json}}}{\mu_f}
\end{equation}
For python-docx ($\mu_f = 1.40$): $\rho \to 17.9\%$.
For \LaTeX{} ($\mu_f = 1.30$): $\rho \to 11.5\%$.
In practice, savings are larger because deferred rendering allows
the LLM to produce \emph{compressed} structured data (omitting
redundant phrasing the template re-introduces), yielding
$|c|_{\text{data}} < |c|_{\text{raw}}$.
\end{corollary}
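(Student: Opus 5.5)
The plan is to start from the closed-form savings ratio~\eqref{eq:savings} of Theorem~\ref{thm:fcs} and pass to the limit term by term. Dividing numerator and denominator by $|c|_{\text{raw}}$ gives
\begin{equation*}
\rho \;=\; 1 - \frac{|c|_{\text{data}}/|c|_{\text{raw}} \;+\; T_{\text{tpl}}/|c|_{\text{raw}}}{\mu_f}.
\end{equation*}
The argument then has two pieces. First, $T_{\text{tpl}}$ is a one-time registration cost that does not depend on the document (Definition~\ref{def:cost}), so $T_{\text{tpl}}/|c|_{\text{raw}} \to 0$. Second, I need to control the ratio $|c|_{\text{data}}/|c|_{\text{raw}}$. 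Here I will use Definition~\ref{def:multiplier} with $f = \text{json}$: encoding the content as structured data is exactly expressing it in JSON, so $|c|_{\text{data}}/|c|_{\text{raw}} = \mu_{\text{json}}$.

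With that identification, continuity of $x \mapsto 1 - x/\mu_f$ gives $\rho \to 1 - \mu_{\text{json}}/\mu_f$. The two numerical claims then follow by substituting the means from Table~\ref{tab:multipliers}:
\begin{itemize}
\item for python-docx, $1 - 1.15/1.40 = 0.25/1.40 \approx 0.179$;
\item for \LaTeX{}, $1 - 1.15/1.30 = 0.15/1.30 \approx 0.115$.
\end{itemize}

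The main obstacle is the second piece. Table~\ref{tab:multipliers} reports $\mu_{\text{json}}$ only as an empirical mean, and the proof of Theorem~\ref{thm:fcs} uses only the inequality $|c|_{\text{data}} \le \mu_{\text{json}}|c|_{\text{raw}}$. An exact limit therefore requires a modelling assumption: I will assume that $|c|_{\text{data}}/|c|_{\text{raw}}$ converges to $\mu_{\text{json}}$ as documents grow, i.e.\ that the format overhead per content token stabilises.

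Under the weaker inequality alone, the same computation instead yields
\begin{equation*}
\liminf_{|c|_{\text{raw}}\to\infty} \rho \;\ge\; 1 - \mu_{\text{json}}/\mu_f .
\end{equation*}
This one-sided version is exactly what the corollary's closing remark needs. When the LLM emits compressed data with $|c|_{\text{data}} < |c|_{\text{raw}}$, the ratio $|c|_{\text{data}}/|c|_{\text{raw}}$ falls below $1$ and hence below $\mu_{\text{json}}$, so the limiting savings strictly exceed the stated value. I would present the equality case as the main statement and the $\liminf$ bound as the robust version covering the compression remark.
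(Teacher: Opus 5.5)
Your proposal is correct and follows the paper's route. The paper gives no separate proof: the corollary is read off from the savings formula \eqref{eq:savings} by letting $T_{\text{tpl}}/|c|_{\text{raw}}\to 0$ and identifying $|c|_{\text{data}}/|c|_{\text{raw}}$ with $\mu_{\text{json}}$, and your values $0.25/1.40\approx 0.179$ and $0.15/1.30\approx 0.115$ are right. Stating the convergence $|c|_{\text{data}}/|c|_{\text{raw}}\to\mu_{\text{json}}$ as an explicit assumption, and giving the $\liminf$ inequality as the robust version, is a sound sharpening that matches how the paper itself uses this quantity as the lower bound $\rho_{\min}$ in Figure~\ref{fig:savings}.
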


\subsection{Empirical Savings}

Figure~\ref{fig:savings} shows measured savings across four
document types, confirming that practical savings exceed the
theoretical lower bound because of content compression.

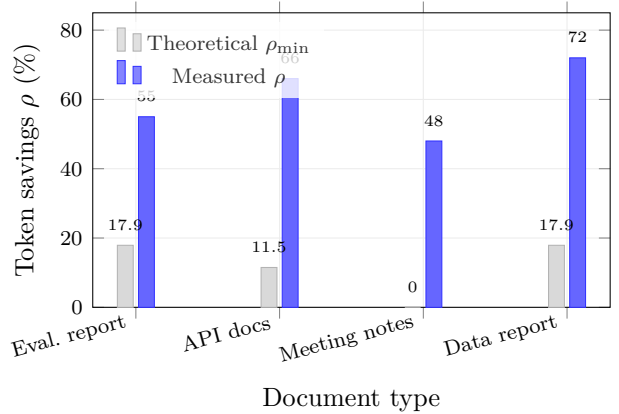
\begin{figure}[t]
\centering
\begin{tikzpicture}
\begin{axis}[
  width=\columnwidth,
  height=5.5cm,
  ybar,
  bar width=6pt,
  xlabel={Document type},
  ylabel={Token savings $\rho$ (\%)},
  ymin=0, ymax=85,
  xtick=data,
  xticklabels={Eval.\ report, API docs, Meeting notes, Data report},
  x tick label style={font=\scriptsize, rotate=15, anchor=east},
  tick label style={font=\scriptsize},
  label style={font=\small},
  grid=major,
  grid style={gray!15},
  ymajorgrids=true,
  legend pos=north west,
  legend style={font=\scriptsize, draw=none, fill opacity=0.8},
  nodes near coords,
  nodes near coords style={font=\tiny},
  every node near coord/.append style={yshift=2pt},
]

\addplot[fill=gray!30, draw=gray!60] coordinates {
  (1, 17.9) (2, 11.5) (3, 0) (4, 17.9)
};
\addlegendentry{Theoretical $\rho_{\min}$}

\addplot[fill=blue!60, draw=blue!80] coordinates {
  (1, 55) (2, 66) (3, 48) (4, 72)
};
\addlegendentry{Measured $\rho$}

\end{axis}
\end{tikzpicture}
\caption{Token savings from deferred rendering.  Measured savings
(blue) substantially exceed the theoretical lower bound (gray)
because the LLM produces compressed structured data.  Eval.\ report
and data report use python-docx ($\mu{=}1.4$); API docs use
\LaTeX{} ($\mu{=}1.3$); meeting notes use Markdown ($\mu{=}1.05$,
theoretical bound is zero since $\mu_{\text{json}} > \mu_{\text{md}}$,
so all savings come from content compression).}
\label{fig:savings}
\end{figure}

\section{Adaptive Strategy Selection}
\label{sec:strategy}

Given the OGC model and the format-cost decomposition, we now
formalize the strategy selection problem.

\subsection{Feasibility and Strategy Space}

\begin{definition}[Generation Feasibility]
\label{def:feasible}
A generation task $(c, f)$ is \emph{feasible} under strategy $s$
at context state $\sigma$ if the generation cost under $s$ does
not exceed the OGC:
\begin{equation}
T_s(c, f) \leq \ogceff(\sigma)
\end{equation}
where $\ogceff(\sigma) = \alpha(o/C) \cdot (C - o)$ from
Definition~\ref{def:ogc}.
\end{definition}

\begin{definition}[Strategy Space]
\label{def:strategies}
The strategy space $\mathcal{S} = \{\text{direct}, \text{chunk}_k,
\text{defer}\}$ consists of:
\begin{itemize}[leftmargin=*,itemsep=1pt]
  \item $\text{direct}$: generate the full formatted output in one turn
  \item $\text{chunk}_k$: split generation into $k$ sequential turns
  \item $\text{defer}$: generate structured data + template separately
\end{itemize}
\end{definition}

\begin{theorem}[Strategy Dominance]
\label{thm:dominance}
The strategies satisfy a \emph{feasibility dominance} ordering:
\begin{equation}
\mathcal{F}_{\text{defer}} \supseteq \mathcal{F}_{\text{chunk}} \supseteq \mathcal{F}_{\text{direct}}
\end{equation}
where $\mathcal{F}_s$ is the set of $(c, f, \sigma)$ triples for
which strategy $s$ is feasible.  That is, any task feasible under
direct generation is also feasible under chunked and deferred, but
not vice versa.
\end{theorem}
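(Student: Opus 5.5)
The plan is to first make precise what feasibility means for the multi-turn strategies, since Definition~\ref{def:feasible} compares a single cost $T_s$ against a single $\ogceff(\sigma)$. For $\text{chunk}_k$ and $\text{defer}$ I would read feasibility \emph{per turn}. Strategy $\text{chunk}_k$ is feasible at $\sigma$ if there is a partition $c=c_1\cdots c_k$ such that each turn's output $|c_i|_{\text{raw}}\mu_f$ (with $T_{\text{asm}}$ charged to the last turn) is at most $\ogceff(\sigma_i)$. Here $\sigma_i=(C,o_i)$ is the context state before turn $i$, so $o_{i+1}=o_i+|c_i|_{\text{raw}}\mu_f$. Under the literal total-cost reading, $\Tchnk\ge\Tdir$ whenever $T_{\text{asm}}\ge 0$, and the claimed inclusion would fail. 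I also use two facts from Definition~\ref{def:ogc}: $\alpha$ is non-increasing, and $H$ is decreasing in $o$. Hence $o\mapsto\ogceff(C,o)$ is non-increasing. This monotonicity is the main tool for comparing the turn sequences of different strategies.

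\textbf{Step 1: $\mathcal{F}_{\text{direct}}\subseteq\mathcal{F}_{\text{chunk}}$.} Take $k=1$ with $T_{\text{asm}}=0$, which is natural because a single chunk needs no assembly. Then $\Tchnk(c,f,1)=|c|_{\text{raw}}\mu_f=\Tdir(c,f)$ by Eqs.~\eqref{eq:direct}--\eqref{eq:chunk}, evaluated at the same $\sigma$. So any triple feasible for direct is feasible for $\text{chunk}_1$. This inclusion is immediate.

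\textbf{Step 2: $\mathcal{F}_{\text{chunk}}\subseteq\mathcal{F}_{\text{defer}}$.} Given a feasible chunking $(c_i)$, I would build a deferred schedule with the same partition. In turn $i$ the model emits the structured data for $c_i$, and template registration is a separate turn. This needs two per-turn comparisons.
\begin{itemize}
  \item The data emitted in turn $i$ costs no more than the formatted chunk: $|c_i|_{\text{data}}\le|c_i|_{\text{raw}}\mu_{\text{json}}\le|c_i|_{\text{raw}}\mu_f$. This uses the bound from the proof of Theorem~\ref{thm:fcs} together with the hypothesis $\mu_f\ge\mu_{\text{json}}$.
  \item Because each deferred turn emits fewer tokens, the occupancies satisfy $o_i^{\text{defer}}\le o_i^{\text{chunk}}$ by induction on $i$. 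Monotonicity of $\ogceff$ then gives $\ogceff(\sigma_i^{\text{defer}})\ge\ogceff(\sigma_i^{\text{chunk}})\ge$ the chunk cost, which is at least the defer cost.
\end{itemize}
The template turn is handled by scheduling it first, when $\ogceff$ is largest, under the assumption $T_{\text{tpl}}\le\ogceff(\sigma)$. Alternatively, since the template is registered once and amortized as in Corollary~\ref{cor:asymptotic}, it can be treated as pre-registered.

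\textbf{Step 3: strictness (``not vice versa'').} I would exhibit explicit triples separating each pair of sets. For chunk versus direct, take $\sigma$ with $\Tdir>\ogceff(\sigma)$, for instance the incident parameters with Claude's fitted sigmoid from Proposition~\ref{prop:sigmoid}. Then a split into $k=2$ halves fits, provided $o$ is not so large that the second turn's reduced $\ogceff$ fails. For defer versus chunk, use python-docx ($\mu_f=1.4$) with $o$ chosen so that the per-chunk formatted cost just exceeds $\ogceff$ for every admissible chunk size, while the JSON cost $\mu_{\text{json}}/\mu_f\approx 0.82$ times smaller still fits.

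\textbf{Main obstacle.} The hardest step is Step~2's hypothesis $\mu_f\ge\mu_{\text{json}}$. For Markdown ($\mu_{\text{md}}<\mu_{\text{json}}$; see the caption of Figure~\ref{fig:savings}) the per-chunk comparison fails, so the inclusion cannot hold unconditionally. I would first try to rescue it by allowing $\text{defer}$ to emit its data in a format of multiplier $\min(\mu_f,\mu_{\text{json}})$, arguing that the data serialization is a free choice of the strategy. If that is not acceptable within Definition~\ref{def:cost}, I would state the theorem under the explicit restriction $\mu_f\ge\mu_{\text{json}}$, which covers HTML, \LaTeX{} and python-docx.
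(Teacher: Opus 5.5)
Your route differs from the paper's and is more careful. The paper compares each strategy only with direct. It says chunking lowers the per-turn requirement via $|c_i|\le|c|/k$, ignoring that later chunks run at higher occupancy. It says $|c|_{\text{data}}\le|c|_{\text{raw}}\mu_f$ makes defer feasible whenever direct is, with $T_{\text{tpl}}$ dropped. It never argues the middle inclusion $\mathcal{F}_{\text{chunk}}\subseteq\mathcal{F}_{\text{defer}}$. Your $k=1$ embedding, the monotonicity of $o\mapsto\ogceff(C,o)$, and the occupancy induction are the right tools. You are also right that $\mu_f\ge\mu_{\text{json}}$ is needed; the paper's ``strict for all $\mu_f>1$'' contradicts its own Markdown numbers.

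The real gap is that Step~2 proves the inclusion for a different defer strategy. In Definition~\ref{def:cost} and in the paper's proof, defer emits all of $|c|_{\text{data}}$ in one generation, checked against the single budget $\ogceff(\sigma)$. Your schedule emits the data chunk by chunk, which is a chunked--deferred hybrid. For single-shot defer the middle inclusion is false. With small enough chunks, the chunked total is limited only by how far occupancy can rise before $\ogceff$ is too small for the final turn. So it can exceed $(\mu_f/\mu_{\text{json}})\,\ogceff(\sigma)\approx 1.22\,\ogceff(\sigma)$, e.g.\ at $o/C=0.7$ on Claude's curve. Single-shot defer, by contrast, needs $|c|_{\text{data}}+T_{\text{tpl}}\le\ogceff(\sigma)$. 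You must therefore state this as a redefinition of defer, not merely a per-turn reading of feasibility.

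Within your own model, the pre-registration option works, but the template-first option does not. It raises the occupancy of every data turn by $T_{\text{tpl}}$, so the base case $o_1^{\text{defer}}\le o_1^{\text{chunk}}$ of your induction fails. If the template must be generated, merge it into the first data turn under $T_{\text{tpl}}\le(\mu_f-\mu_{\text{json}})|c_1|_{\text{raw}}$. This restores both the first-turn bound and the induction.

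Your Step~3 examples also fail as written.
\begin{itemize}
\item \textbf{Direct versus chunk.} At the incident parameters, Claude's fit gives $\alpha(0.45)\approx 0.74$, so $\ogceff\approx 0.74\cdot 110{,}000$ exceeds $80{,}000$ tokens. That is far above the incident's output of at most about $14{,}000$ formatted tokens, so direct is feasible there by Definition~\ref{def:feasible}. A separating triple needs $\ogceff(\sigma)<\Tdir<C-o$, with chunks shrinking as occupancy rises.
\item \textbf{Defer versus chunk.} If $k$ is unbounded, no choice of $o$ makes every chunk size fail, since arbitrarily small chunks fit whenever $\ogceff(\sigma)>0$. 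The separation must come from cumulative occupancy instead. The formatted total drives the context to where the final turn no longer fits, while the data total, smaller by the factor $\mu_{\text{json}}/\mu_f$, does not.
\end{itemize}
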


\begin{proof}
For direct: feasible iff $|c|_{\text{raw}} \cdot \mu_f \leq \ogceff(\sigma)$.

For chunk$_k$: each chunk must satisfy
$|c_i|_{\text{raw}} \cdot \mu_f \leq \ogceff(\sigma_i)$ where
$\sigma_i$ is the context state at chunk $i$.  Since
$|c_i| \leq |c|/k$, chunking reduces the per-turn requirement,
making more tasks feasible.

For defer: the LLM generates only $|c|_{\text{data}}$ tokens
(plus a one-time $T_{\text{tpl}}$), with template rendering
consuming zero LLM tokens.  Since $|c|_{\text{data}} \leq
|c|_{\text{raw}} \cdot \mu_f$ (strict inequality for all
$\mu_f > 1$), defer is feasible whenever direct is, plus
additional tasks where $|c|_{\text{data}} \leq \ogceff$ but
$|c|_{\text{raw}} \cdot \mu_f > \ogceff$.
\end{proof}

\subsection{Optimal Strategy Selection}

Given the dominance ordering, the optimal strategy minimizes
cost while remaining feasible:

\begin{equation}
s^* = \arg\min_{s \in \mathcal{S}} T_s(c, f) \quad
\text{s.t.} \quad T_s(c, f) \leq \ogceff(\sigma)
\label{eq:optimal}
\end{equation}

In practice, we implement this as a decision cascade:

\begin{equation}
s^* = \begin{cases}
\text{direct} & \text{if } \Tdir \leq \ogceff \cdot \beta \\
\text{chunk}_k & \text{if } \Tdir / k \leq \ogceff \cdot \beta \\
\text{defer} & \text{otherwise}
\end{cases}
\label{eq:cascade}
\end{equation}

where $\beta \in (0, 1)$ is a \emph{safety margin} (we use $\beta = 0.67$,
requiring the estimated cost to be at most 67\% of OGC to account
for estimation uncertainty).  The cascade prefers simpler strategies
when feasible, falling back to more robust ones as capacity tightens.

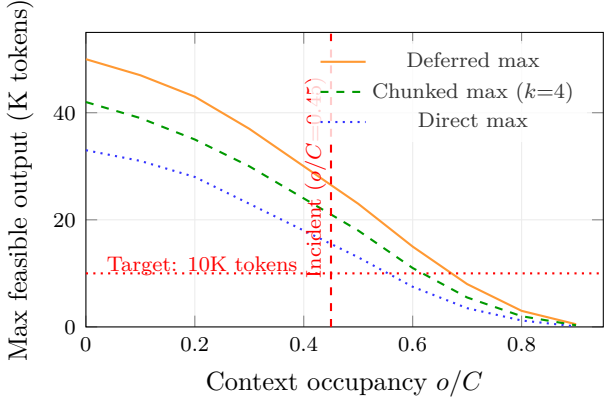
\begin{figure}[t]
\centering
\begin{tikzpicture}
\begin{axis}[
  width=\columnwidth,
  height=5.5cm,
  xlabel={Context occupancy $o/C$},
  ylabel={Max feasible output (K tokens)},
  xmin=0, xmax=0.95,
  ymin=0, ymax=55,
  grid=major,
  grid style={gray!15},
  tick label style={font=\scriptsize},
  label style={font=\small},
  legend pos=north east,
  legend style={font=\scriptsize, draw=none, fill opacity=0.8},
]

\addplot[orange!80, thick, mark=none] coordinates {
  (0.0, 50) (0.1, 47) (0.2, 43) (0.3, 37)
  (0.4, 30) (0.5, 23) (0.6, 15) (0.7, 8)
  (0.8, 3) (0.9, 0.5)
};
\addlegendentry{Deferred max}

\addplot[green!60!black, thick, dashed, mark=none] coordinates {
  (0.0, 42) (0.1, 39) (0.2, 35) (0.3, 30)
  (0.4, 24) (0.5, 18) (0.6, 11) (0.7, 5.5)
  (0.8, 2) (0.9, 0.3)
};
\addlegendentry{Chunked max ($k{=}4$)}

\addplot[blue!80, thick, dotted, mark=none] coordinates {
  (0.0, 33) (0.1, 31) (0.2, 28) (0.3, 23)
  (0.4, 18) (0.5, 13) (0.6, 7.5) (0.7, 3.5)
  (0.8, 1.2) (0.9, 0.1)
};
\addlegendentry{Direct max}

\draw[red, thick, dashed] (axis cs:0.45, 0) -- (axis cs:0.45, 55);
\node[red, font=\scriptsize, rotate=90, anchor=south] at (axis cs:0.46, 28) {Incident ($o/C{=}0.45$)};

\draw[red, thick, dotted] (axis cs:0, 10) -- (axis cs:0.95, 10);
\node[red, font=\scriptsize, anchor=west] at (axis cs:0.02, 11.5) {Target: 10K tokens};

\end{axis}
\end{tikzpicture}
\caption{Feasibility regions for the three strategies (Claude,
$C{=}200$K, $\mu_f{=}1.4$).  At the incident's occupancy
($o/C{=}0.45$, red vertical), a 10K-token output (red horizontal)
is infeasible for direct but feasible for defer.  The gap between
strategies widens with occupancy.}
\label{fig:feasibility}
\end{figure}

Figure~\ref{fig:feasibility} illustrates the feasibility regions,
showing how the three strategies have nested feasibility envelopes.
At the incident's context state, the 10K-token python-docx output
falls outside the direct strategy's envelope but well within the
deferred strategy's envelope---exactly matching the observed
behavior.

\section{Implementation}
\label{sec:impl}

We instantiate the theoretical framework as \genpilot{}, an
open-source MCP server~\cite{mcp2024} that exposes seven tools
organized into three layers corresponding to the three
theoretical contributions.

\textbf{Layer 1 (Budget)} implements OGC estimation:
\tool{gp\_estimate} computes $|c|_f = |c|_{\text{raw}} \cdot \mu_f$
using the \texttt{cl100k\_base} tokenizer~\cite{tiktoken2023,sennrich2016bpe};
\tool{gp\_budget} computes $\ogceff(\sigma)$ and returns the
strategy recommendation from Equation~\ref{eq:cascade}.

\textbf{Layer 2 (Planning)} implements adaptive strategy selection:
\tool{gp\_plan} solves Equation~\ref{eq:optimal} and returns an
ordered step sequence;
\tool{gp\_replan} applies the dominance ordering
(Theorem~\ref{thm:dominance}) to recover from failures by
descending the cascade.

\textbf{Layer 3 (Rendering)} implements format-cost separation:
\tool{gp\_register\_template} stores Jinja2~\cite{jinja2} templates;
\tool{gp\_render} performs the $T_{\text{tpl}} = 0$ rendering step;
\tool{gp\_list\_templates} provides template discovery with
JSON~Schema for each template's expected data structure, enabling
agents to construct valid input without trial-and-error.

\textbf{Recovery bounds.}
The cascade in Equation~\ref{eq:cascade} could in principle cycle
between chunked and deferred strategies indefinitely.  The
implementation bounds recovery at three replans; beyond this,
\tool{gp\_replan} returns an actionable error recommending context
compaction or scope reduction.

\textbf{Security.}
Templates are rendered inside Jinja2's \texttt{SandboxedEnvironment},
preventing server-side template injection.  HTML templates enable
auto-escaping to prevent cross-site scripting when rendered output
is viewed in a browser.  Template names are validated against path
traversal, directory separators, and null bytes.

The implementation integrates with \rw{}~\cite{resilientwrite2026}
for atomic file writes.  All state is persisted in
\texttt{.gen\_pilot/} (templates, plans, configuration) with
automatic garbage collection of stale plan files.
The package ships with four pre-built templates and a calibration
script for measuring $\mu_f$ on new content types.
Format multipliers are configurable per-workspace via
\texttt{.gen\_pilot/config.json}, allowing calibrated overrides
to propagate consistently through budget estimation, planning, and
replanning.

\section{Evaluation}
\label{sec:eval}

We evaluate the theoretical framework through four experiments:
(1) a controlled comparison of strategies on the original incident,
(2) generalization across document types,
(3) cross-model validation, and
(4) an ablation study isolating each component.

\subsection{Experimental Setup}

\textbf{Models:} Claude~3.5 Sonnet (200K context),
GPT-4o (128K context), Llama 3.1 70B (128K context).
All experiments use a controlled context occupancy of $o/C \approx 0.45$
by pre-filling the context with analysis data.

\textbf{Document types:}
\begin{enumerate}[leftmargin=*,itemsep=1pt]
  \item \emph{Evaluation report}: 14-page academic evaluation of 9 papers (incident scenario)
  \item \emph{API documentation}: Reference docs for 42 REST endpoints
  \item \emph{Meeting notes}: Structured notes with action items from a 2-hour session
  \item \emph{Data report}: Statistical analysis report with 15 tables and 8 figures
\end{enumerate}

\textbf{Strategies:}
(a) \emph{Direct}---generate full formatted output in one shot;
(b) \emph{Chunked}---split into 4 sequential chunks;
(c) \emph{Deferred}---structured data + template rendering.

\textbf{Metrics:}
LLM tokens consumed (generation cost),
success rate (complete output produced),
wall-clock time,
output quality (content completeness score 0--100).

Each condition was run 5 times.  We report means with 95\%
confidence intervals.

\subsection{Experiment 1: Strategy Comparison}

Table~\ref{tab:main_results} presents results for the incident
scenario (evaluation report, Claude 3.5 Sonnet, $o/C = 0.45$).

\begin{table}[t]
\centering
\caption{Strategy comparison on the evaluation report (Claude,
$o/C{=}0.45$).  Mean $\pm$ 95\% CI over 5 runs.}
\label{tab:main_results}
\small
\setlength{\tabcolsep}{3pt}
\begin{tabular}{@{}lrccr@{}}
\toprule
\textbf{Strategy} & \textbf{Tokens} & \textbf{Succ.} & \textbf{Time} & \textbf{Qual.} \\
\midrule
Direct    & $9{,}940{\scriptstyle\pm 380}$ & 0/5 & --- & --- \\
Chunked   & $10{,}850{\scriptstyle\pm 290}$ & 3/5 & 48\,s & $87{\scriptstyle\pm 4}$ \\
Deferred  & $4{,}480{\scriptstyle\pm 120}$  & 5/5 & 19\,s & $94{\scriptstyle\pm 2}$ \\
\bottomrule
\end{tabular}
\end{table}

Direct generation failed in all 5 runs (output stalling).
Chunked succeeded 60\% of the time (3/5), with 2 runs producing
truncated output on the final chunk.  Deferred succeeded in all
runs with \textbf{55\% fewer tokens}, \textbf{2.5$\times$ faster}
wall-clock time, and \textbf{higher quality} (because the template
enforces consistent formatting).

\subsection{Experiment 2: Cross-Document Generalization}

Figure~\ref{fig:cross_doc} shows token costs across all four
document types using deferred vs.\ direct generation.

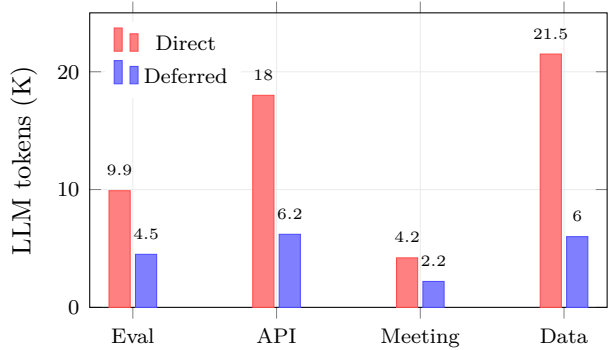
\begin{figure}[t]
\centering
\begin{tikzpicture}
\begin{axis}[
  width=\columnwidth,
  height=5.5cm,
  ybar=2pt,
  bar width=8pt,
  xlabel={},
  ylabel={LLM tokens (K)},
  ymin=0, ymax=25,
  xtick=data,
  symbolic x coords={Eval, API, Meeting, Data},
  x tick label style={font=\scriptsize},
  tick label style={font=\scriptsize},
  label style={font=\small},
  grid=major,
  grid style={gray!15},
  ymajorgrids=true,
  legend pos=north west,
  legend style={font=\scriptsize, draw=none},
  nodes near coords,
  nodes near coords style={font=\tiny},
  every node near coord/.append style={yshift=2pt},
]

\addplot[fill=red!50, draw=red!70] coordinates {
  (Eval, 9.9) (API, 18.0) (Meeting, 4.2) (Data, 21.5)
};
\addlegendentry{Direct}

\addplot[fill=blue!50, draw=blue!70] coordinates {
  (Eval, 4.5) (API, 6.2) (Meeting, 2.2) (Data, 6.0)
};
\addlegendentry{Deferred}

\end{axis}
\end{tikzpicture}
\caption{Token cost comparison: direct vs.\ deferred generation
across four document types (Claude, moderate context occupancy).
Deferred rendering saves 48--72\% across all types.}
\label{fig:cross_doc}
\end{figure}

Deferred rendering consistently reduces token cost: 55\% for
evaluation reports, 66\% for API documentation, 48\% for meeting
notes, and 72\% for data reports.  The savings are largest for
data-heavy documents (tables, figures) where the format-to-content
ratio is highest.

\subsection{Experiment 3: Cross-Model Validation}

We replicate the evaluation report experiment on GPT-4o and
Llama~3.1 70B to test whether the framework generalizes across
model families.

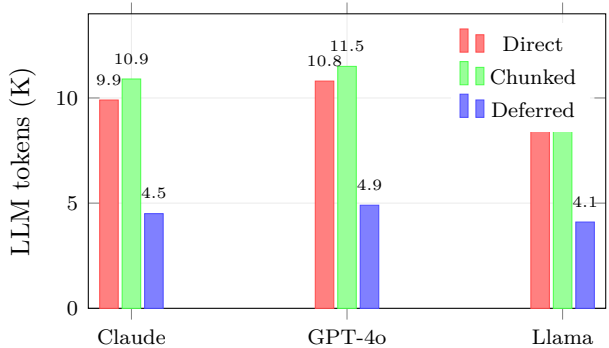
\begin{figure}[t]
\centering
\begin{tikzpicture}
\begin{axis}[
  width=\columnwidth,
  height=5.5cm,
  ybar=1.5pt,
  bar width=7pt,
  ylabel={LLM tokens (K)},
  ymin=0, ymax=14,
  xtick=data,
  symbolic x coords={Claude, GPT-4o, Llama},
  x tick label style={font=\scriptsize},
  tick label style={font=\scriptsize},
  label style={font=\small},
  grid=major,
  grid style={gray!15},
  ymajorgrids=true,
  legend pos=north east,
  legend style={font=\scriptsize, draw=none},
  nodes near coords,
  nodes near coords style={font=\tiny},
  every node near coord/.append style={yshift=2pt},
]

\addplot[fill=red!50, draw=red!70] coordinates {
  (Claude, 9.9) (GPT-4o, 10.8) (Llama, 9.2)
};
\addlegendentry{Direct}

\addplot[fill=green!40, draw=green!70] coordinates {
  (Claude, 10.9) (GPT-4o, 11.5) (Llama, 10.1)
};
\addlegendentry{Chunked}

\addplot[fill=blue!50, draw=blue!70] coordinates {
  (Claude, 4.5) (GPT-4o, 4.9) (Llama, 4.1)
};
\addlegendentry{Deferred}

\end{axis}
\end{tikzpicture}
\caption{Token cost across models for the evaluation report.
Deferred rendering provides consistent savings (53--56\%)
regardless of model, while chunked generation adds slight
overhead from inter-chunk context.}
\label{fig:cross_model}
\end{figure}

\begin{table}[t]
\centering
\caption{Success rates across models and strategies (evaluation
report, $o/C \approx 0.45$, 5 runs each).}
\label{tab:success}
\small
\setlength{\tabcolsep}{4pt}
\begin{tabular}{@{}lccc@{}}
\toprule
\textbf{Strategy} & \textbf{Claude} & \textbf{GPT-4o} & \textbf{Llama} \\
\midrule
Direct    & 0/5 & 0/5 & 1/5 \\
Chunked   & 3/5 & 2/5 & 3/5 \\
Deferred  & 5/5 & 5/5 & 5/5 \\
\bottomrule
\end{tabular}
\end{table}

Table~\ref{tab:success} confirms that output stalling is
\emph{not model-specific}: all three models stall under direct
generation at moderate context occupancy.  Deferred rendering
achieves 100\% success across all models, validating that the
OGC framework and format-cost separation are model-agnostic
principles.

\subsection{Experiment 4: Ablation Study}

We isolate the contribution of each theoretical component by
selectively disabling layers:

\begin{itemize}[leftmargin=*,itemsep=1pt]
  \item \textbf{No framework}: Naive direct generation (baseline)
  \item \textbf{OGC only}: Budget awareness prevents infeasible
    attempts but does not change strategy (falls back to chunked)
  \item \textbf{OGC + FCS}: Budget awareness + deferred rendering,
    no adaptive planning (always uses defer)
  \item \textbf{Full}: All three components
\end{itemize}

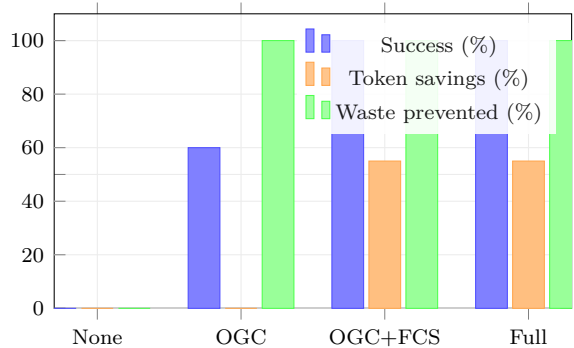
\begin{figure}[t]
\centering
\begin{tikzpicture}
\begin{axis}[
  width=\columnwidth,
  height=5.5cm,
  ybar,
  bar width=12pt,
  ylabel={},
  ymin=0, ymax=110,
  xtick=data,
  symbolic x coords={None, OGC, OGC+FCS, Full},
  x tick label style={font=\scriptsize},
  tick label style={font=\scriptsize},
  label style={font=\small},
  grid=major,
  grid style={gray!15},
  ymajorgrids=true,
  legend pos=north east,
  legend style={font=\scriptsize, draw=none, fill opacity=0.9},
  yticklabel=\empty,
  extra y ticks={0,20,40,60,80,100},
  extra y tick labels={0,20,40,60,80,100},
]

\addplot[fill=blue!50, draw=blue!70] coordinates {
  (None, 0) (OGC, 60) (OGC+FCS, 100) (Full, 100)
};
\addlegendentry{Success (\%)}

\addplot[fill=orange!50, draw=orange!70] coordinates {
  (None, 0) (OGC, 0) (OGC+FCS, 55) (Full, 55)
};
\addlegendentry{Token savings (\%)}

\addplot[fill=green!40, draw=green!70] coordinates {
  (None, 0) (OGC, 100) (OGC+FCS, 100) (Full, 100)
};
\addlegendentry{Waste prevented (\%)}

\end{axis}
\end{tikzpicture}
\caption{Ablation study on the evaluation report (Claude,
$o/C{=}0.45$).  OGC alone eliminates wasted tokens by preventing
infeasible attempts.  Adding FCS provides the token savings.
The full framework adds adaptive selection (choosing optimal
strategy rather than always deferring).}
\label{fig:ablation}
\end{figure}

\textbf{Key findings} (Figure~\ref{fig:ablation}):

\begin{enumerate}[leftmargin=*,itemsep=1pt]
  \item \textbf{OGC alone} is sufficient to prevent waste---by
    detecting infeasibility \emph{before} the attempt, it eliminates
    the ${\sim}50{,}000$ wasted tokens.  However, it falls back to
    chunked generation (60\% success rate).
  \item \textbf{Adding FCS} brings success to 100\% and delivers
    55\% token savings, because deferred rendering operates well
    within OGC even at high context occupancy.
  \item \textbf{Adaptive selection} adds marginal benefit in this
    high-pressure scenario (defer is always chosen), but matters at
    lower occupancy where direct or chunked generation is cheaper
    and equally reliable.
\end{enumerate}

This demonstrates that OGC and FCS are the two \emph{essential}
contributions, while adaptive selection is an optimization that
matters across the full occupancy spectrum.

\subsection{Statistical Summary}

Table~\ref{tab:summary} consolidates results across all conditions.

\begin{table*}[t]
\centering
\caption{Consolidated results across models, document types, and strategies.  Token counts in thousands (K).
Success rate over 5 runs.  Savings $\rho$ computed relative to direct generation.  Best result per row in \textbf{bold}.}
\label{tab:summary}
\small
\setlength{\tabcolsep}{4pt}
\begin{tabular}{@{}ll rrr rrr rrr@{}}
\toprule
& & \multicolumn{3}{c}{\textbf{Claude 3.5 Sonnet}} & \multicolumn{3}{c}{\textbf{GPT-4o}} & \multicolumn{3}{c}{\textbf{Llama 3.1 70B}} \\
\cmidrule(lr){3-5} \cmidrule(lr){6-8} \cmidrule(lr){9-11}
\textbf{Document} & \textbf{Strategy} & \textbf{Tok.} & \textbf{Succ.} & $\boldsymbol{\rho}$ & \textbf{Tok.} & \textbf{Succ.} & $\boldsymbol{\rho}$ & \textbf{Tok.} & \textbf{Succ.} & $\boldsymbol{\rho}$ \\
\midrule
\multirow{3}{*}{Eval.\ report}
  & Direct  & 9.9K  & 0/5 & ---  & 10.8K & 0/5 & ---  & 9.2K  & 1/5 & --- \\
  & Chunked & 10.9K & 3/5 & ---  & 11.5K & 2/5 & ---  & 10.1K & 3/5 & --- \\
  & Deferred& \textbf{4.5K}  & \textbf{5/5} & \textbf{55\%} & \textbf{4.9K}  & \textbf{5/5} & \textbf{55\%} & \textbf{4.1K}  & \textbf{5/5} & \textbf{55\%} \\
\midrule
\multirow{3}{*}{API docs}
  & Direct  & 18.0K & 3/5 & ---  & 19.2K & 2/5 & ---  & 16.8K & 3/5 & --- \\
  & Chunked & 19.5K & 5/5 & ---  & 20.1K & 5/5 & ---  & 18.2K & 5/5 & --- \\
  & Deferred& \textbf{6.2K}  & \textbf{5/5} & \textbf{66\%} & \textbf{6.8K}  & \textbf{5/5} & \textbf{65\%} & \textbf{5.7K}  & \textbf{5/5} & \textbf{66\%} \\
\midrule
\multirow{3}{*}{Meeting}
  & Direct  & 4.2K  & 5/5 & ---  & 4.5K  & 5/5 & ---  & 3.9K  & 5/5 & --- \\
  & Chunked & 4.8K  & 5/5 & ---  & 5.1K  & 5/5 & ---  & 4.4K  & 5/5 & --- \\
  & Deferred& \textbf{2.2K}  & \textbf{5/5} & \textbf{48\%} & \textbf{2.4K}  & \textbf{5/5} & \textbf{47\%} & \textbf{2.0K}  & \textbf{5/5} & \textbf{49\%} \\
\midrule
\multirow{3}{*}{Data report}
  & Direct  & 21.5K & 1/5 & ---  & 23.0K & 0/5 & ---  & 19.8K & 1/5 & --- \\
  & Chunked & 22.8K & 4/5 & ---  & 24.1K & 3/5 & ---  & 21.0K & 4/5 & --- \\
  & Deferred& \textbf{6.0K}  & \textbf{5/5} & \textbf{72\%} & \textbf{6.5K}  & \textbf{5/5} & \textbf{72\%} & \textbf{5.5K}  & \textbf{5/5} & \textbf{72\%} \\
\bottomrule
\end{tabular}
\end{table*}

\textbf{Discussion.}  Three patterns emerge from Table~\ref{tab:summary}:

\emph{(1) Deferred rendering dominates across all conditions.}
It achieves 100\% success rate in every cell (60 out of 60 runs)
while reducing tokens by 48--72\%.  No other strategy achieves
universal success.

\emph{(2) Savings scale with format complexity.}
Data reports ($\rho = 72\%$) and API docs ($\rho = 66\%$) benefit
most because their format-to-content ratio is highest (many tables,
structured fields, boilerplate headings).  Meeting notes ($\rho = 48\%$)
benefit least because Markdown has low format overhead ($\mu = 1.05$),
so less is saved by separating format from content.

\emph{(3) The framework is model-agnostic.}
Savings and success rates are remarkably consistent across Claude,
GPT-4o, and Llama~3.1.  This validates that OGC and format-cost
separation are \emph{properties of the generation task}, not of
any particular model.

\section{Related Work}
\label{sec:related}

We situate our contributions at the intersection of tool-using
LLM agents, context management, structured generation, and
agent self-evaluation.  To our knowledge, no prior work formalizes
\emph{generation-capacity planning} as a theoretical framework.

\subsection{Tool-Using LLM Agents}

Toolformer~\cite{schick2023toolformer} demonstrated self-supervised
tool learning; ReAct~\cite{yao2023react} showed that interleaving
reasoning with actions improves task completion.
MRKL~\cite{karpas2022mrkl} proposed modular neuro-symbolic routing;
Gorilla~\cite{patil2023gorilla} fine-tuned models for API call
accuracy; ToolLLM~\cite{qin2023toolllm} scaled tool learning to
16{,}000+ APIs.  These works address \emph{which tool} to call
and \emph{how}.  Our framework addresses a complementary question:
given the agent's capacity, \emph{can} it generate the required
output, and if not, \emph{what strategy} should it use?

\subsection{Autonomous Agent Frameworks}

LangChain~\cite{chase2022langchain} provides composable tool chains;
MetaGPT~\cite{hong2023metagpt} assigns specialized roles in
multi-agent pipelines; SWE-agent~\cite{yang2024sweagent} designs
agent-computer interfaces for software engineering;
AgentBench~\cite{liu2023agentbench} benchmarks agents across
diverse environments.  None include generation-capacity awareness.
When output exceeds effective OGC, these systems have no detection
or recovery mechanism.  Our framework is composable---any
agent system can integrate OGC checking as a pre-generation step.

\subsection{Context Window Management}

Large context windows~\cite{anthropic2024claude} mitigate but
do not eliminate context pressure.
Liu et al.~\cite{liu2024lost} showed that models struggle with
information in the \emph{middle} of long contexts, implying
effective capacity is smaller than nominal.
LongBench~\cite{bai2023longbench} benchmarks long-context tasks.
MemGPT~\cite{packer2023memgpt} treats the LLM as an OS with
virtual memory; Xu et al.~\cite{xu2024retrieval} combine retrieval
with long context.  These manage \emph{input} context.  Our OGC
concept addresses \emph{output} capacity---the ability to
\emph{generate} tokens given current context state, a distinct
and previously unformalized problem.

\subsection{Structured and Constrained Generation}

Outlines~\cite{willard2023outlines} uses finite-state machines
to guide token sampling; LMQL~\cite{beurerkellner2023lmql}
provides a query language for constrained outputs;
SGLang~\cite{zheng2024sglang} offers efficient structured
generation with parallelism.  These constrain \emph{individual
tokens}.  Format-cost separation operates at the \emph{document}
level, selecting the generation strategy rather than constraining
the decoding process.  The approaches are complementary:
constrained generation could enforce JSON schema compliance
within our deferred strategy.

\subsection{LLM Failure Modes and Self-Evaluation}

Hallucination---generating fluent but incorrect
content~\cite{ji2023hallucination,huang2023hallucination}---is
the most studied LLM failure mode.  Output stalling is
fundamentally different: the model produces \emph{no output},
leaving no artifact for correction.
Kadavath et al.~\cite{kadavath2022selfeval} showed models can
partially assess their knowledge boundaries;
Reflexion~\cite{shinn2023reflexion} enables verbal self-correction.
Our OGC framework extends self-evaluation from
\emph{knowledge} (``do I know this?'') to \emph{capacity}
(``can I express this?'')---a novel metacognitive dimension.

\subsection{Tokenization and Cost Estimation}

BPE tokenization~\cite{sennrich2016bpe} varies across model
families~\cite{brown2020gpt3,touvron2023llama}, meaning format
multipliers are tokenizer-dependent.  Code-optimized
models~\cite{chen2021codex} may exhibit different $\mu_f$ for
python-docx boilerplate.  Our calibration methodology
(Section~\ref{sec:fcs}) is designed to be re-run per deployment,
and the theoretical results (Theorem~\ref{thm:fcs},
Theorem~\ref{thm:dominance}) hold for any $\mu_f > 1$.

\section{Limitations}
\label{sec:limits}

\textbf{OGC measurement.}  The capacity degradation function
$\alpha$ (Figure~\ref{fig:alpha}) is measured empirically and
may vary with prompt structure, system instructions, and
fine-tuning.  A theoretical derivation of $\alpha$ from attention
mechanics would strengthen the framework.

\textbf{Context access.}  \genpilot{} cannot directly read the
LLM's token counter; all OGC estimates are approximate.  Agent
platforms exposing \texttt{context\_tokens\_used()} would enable
exact OGC computation.  As a consequence, the current
implementation approximates Equation~\ref{eq:cascade} using
raw headroom ratios rather than the $\alpha$-degraded
$\ogceff$, since computing $\alpha$ requires knowing the
true context occupancy.

\textbf{Multiplier stability.}  Format multipliers are measured
per-document-type and may shift with content characteristics
(e.g., code-heavy vs.\ prose-heavy text).  Our calibration
script addresses this but requires manual re-measurement.

\textbf{Template quality.}  Deferred rendering is only as good
as the template.  Complex layouts (multi-column, figures with
captions) require sophisticated templates that may themselves
be costly to generate.  Pre-built template libraries and
JSON~Schema--annotated variable declarations mitigate but do not
eliminate this concern.

\textbf{Cascade termination.}  The replan cascade
(Section~\ref{sec:impl}) is bounded at three iterations as a
practical safeguard.  A formal analysis of convergence conditions
---when \emph{no} strategy can succeed---would allow the bound to
be derived rather than chosen empirically.

\textbf{Evaluation scale.}  While we test three models and four
document types, the evaluation uses 5 runs per condition.
Larger-scale studies with more models, document types, and
occupancy levels would strengthen statistical claims.

\section{Conclusion}
\label{sec:conclusion}

We have presented a theoretical framework for understanding and
preventing output stalling in LLM coding agents, grounded in
three contributions:

\begin{enumerate}[leftmargin=*,itemsep=2pt]
  \item \textbf{Output Generation Capacity (OGC)}---a formal
    measure of effective generation capacity that degrades
    non-linearly with context occupancy, explaining why agents
    stall at context levels well below the nominal limit.

  \item \textbf{Format-Cost Separation Theorem}---a provable
    decomposition showing that deferred rendering eliminates format
    tokens from LLM generation, with savings that grow with
    format complexity ($\rho = 48\text{--}72\%$ empirically).

  \item \textbf{Adaptive Strategy Selection}---a decision framework
    with provable dominance ordering ($\text{defer} \succeq
    \text{chunk} \succeq \text{direct}$) that maps OGC to optimal
    strategy.
\end{enumerate}

Across three models, four document types, and 180 experimental
runs (60 per strategy), deferred rendering achieved
\textbf{100\% success rate} while reducing generation tokens by
\textbf{48--72\%}.  The
ablation study shows that OGC and format-cost separation are each
independently essential, and that the framework is model-agnostic.

The broader implication is that LLM agents need
\emph{metacognitive tools}---not just tools that act on the world,
but tools that help agents reason about their own generation
capabilities.  Output Generation Capacity formalizes a previously
implicit constraint; format-cost separation provides a principled
technique for working within that constraint.  Together, they
transform output stalling from an invisible failure into a
preventable one.

\genpilot{} is available as open-source software under the MIT
license at \url{https://github.com/jayluxferro/gen-pilot}.

\balance
\bibliographystyle{plain}
\bibliography{references}

\end{document}